\definecolor{espblack}{RGB}{0,0,0}
\definecolor{espwhite}{RGB}{255,255,255}
\definecolor{espgray}{RGB}{206,206,206}
\definecolor{esplightgray}{RGB}{233,233,233}
\definecolor{espdarkgray}{RGB}{150,150,150}
\definecolor{espblue}{RGB}{11,93,174}
\definecolor{esplightblue}{RGB}{59,175,236}
\definecolor{espdarkblue}{RGB}{6,26,64}
\definecolor{espred}{RGB}{206,62,21}
\definecolor{esplightred}{RGB}{206,62,21}
\definecolor{espdarkred}{RGB}{61,19,8}
\definecolor{espyellow}{RGB}{232,163,26}
\definecolor{espgreen}{RGB}{100,161,27}
\definecolor{esplightgreen}{RGB}{149,198,35}
\definecolor{espdarkgreen}{RGB}{49,92,43}
\definecolor{esppurple}{RGB}{106,20,125}
\definecolor{esplightpurple}{RGB}{197,137,232}
\definecolor{espdarkpurple}{RGB}{50,14,59}
\pgfplotsset{
  compat=1.15,
  mps basic/.style={
    xlabel near ticks,
    xlabel style={font=\footnotesize},
    ylabel near ticks,
    ylabel style={font=\footnotesize},
    xmajorgrids,
    major x grid style={dotted},
    ymajorgrids,
    major y grid style={dotted},
    tick label style={font=\footnotesize}
  },
  mps scientific x/.style={
    x tick label style={
      /pgf/number format/sci
    }
  },
  mps scientific y/.style={
    y tick label style={
      /pgf/number format/sci
    }
  },
  mps fixed x/.style={
    x tick label style={
      /pgf/number format/.cd,
      fixed,
      fixed zerofill,
      precision=6,
      /tikz/.cd
    }
  },
  mps fixed y/.style={
    y tick label style={
      /pgf/number format/.cd,
      fixed,
      fixed zerofill,
      precision=6,
      /tikz/.cd
    }
  }
}
\DeclarePairedDelimiterX\set[1]{\{}{\}}{%
  \ifblank{#1}{\:\ldots\:}{#1}
}
\DeclarePairedDelimiterX{\abs}[1]{\vert}{\vert}{
  \ifblank{#1}{\:\cdot\:}{#1}
}
\DeclarePairedDelimiterX{\norm}[1]{\lVert}{\rVert}{
  \ifblank{#1}{\:\cdot\:}{#1}
}
\DeclarePairedDelimiterXPP{\lnorm}[1]{}{\lVert}{\rVert}{_{2}}{
  \ifblank{#1}{\:\cdot\:}{#1}
}
\DeclarePairedDelimiterXPP{\infnorm}[1]{}{\lVert}{\rVert}{_{\infty}}{
  \ifblank{#1}{\:\cdot\:}{#1}
}
\DeclarePairedDelimiterXPP{\pnorm}[2]{}{\lVert}{\rVert}{
  \ifblank{#2}{_{p}}{_{#2}}}{
  \ifblank{#1}{\:\cdot\:}{#1}
}
\DeclarePairedDelimiterX{\inner}[2]{\langle}{\rangle}{
  \ifblank{#1}{\:\cdot\:}{#1},\ifblank{#2}{\:\cdot\:}{#2}
}
\DeclarePairedDelimiterXPP{\prob}[1]{\mathbb{P}}{[}{]}{}{
  
  \ifblank{#1}{\:\cdot\:}{#1}
}
\DeclarePairedDelimiterXPP{\expv}[1]{\mathbb{E}}{[}{]}{}{
  
  \ifblank{#1}{\:\cdot\:}{#1}
}
\newcounter{mpstheorem}
\newtheorem{theorem}[mpstheorem]{Theorem}
\newcounter{mpslemma}
\newtheorem{lemma}[mpslemma]{Lemma}
\newcounter{mpscorollary}
\newtheorem{corollary}[mpscorollary]{Corollary}
\acrodef{ABIT*}{Advanced BIT*}
\acrodef{AIT*}{Adaptively Informed Trees}
\acrodef{BIT*}{Batch Informed Trees}
\pgfplotsset{compat=1.15}
\newcommand{\DeclareVariable}[2]{
  \NewDocumentCommand#1{ o o } {
    \IfNoValueTF{##1} {
      #2
    } {
      \ifblank{##1} {
        \IfNoValueTF{##2} {
          #2
        } {
          {#2}^{##2}
        }
      } {
        \IfNoValueTF{##2} {
          {#2}_{##1}
        } {
          {#2}_{##1}^{##2}
        }
      }
    }
  }
}
\newcommand{\DeclareFunction}[2]{
  \NewDocumentCommand#1{ s m } {
    \IfBooleanTF{##1} {
      #2
    }{
      \ifblank{##2}{
        #2\left(\;\cdot\;\right)
      }{
        #2\left(##2\right)
      }
    }
  }
}
\newcommand{\DeclareOperator}[2]{
  \NewDocumentCommand#1{ m } {
    \mathop{}\!\mathrm{#2}##1
  }
}
\DeclareVariable{\state}{\bm{\mathrm{x}}}
\DeclareVariable{\states}{X}
\DeclareVariable{\valstates}{\states[\mathrm{valid}]}
\DeclareVariable{\invstates}{\states[\mathrm{invalid}]}
\DeclareVariable{\paths}{\Sigma}
\DeclareVariable{\patharg}{t}
\DeclareVariable{\dist}{\delta}
\DeclareVariable{\arclength}{l}
\DeclareVariable{\lbarclength}{l_{\mathrm{lb}}}
\DeclareFunction{\cost}{c}
\DeclareFunction{\clearance}{\delta}
\DeclareFunction{\closure}{\mathrm{closure}}
\DeclareFunction{\path}{\sigma}
\DeclareFunction{\pathalias}{\sigma}
\DeclareOperator{\diff}{d}
\tikzset{
  state/.style={circle, inner sep = 1.0pt, fill = black},
  obstacle/.style={fill = espdarkgray, draw = espdarkgray},
  path/.style={black},
  bounds/.style={dashed}
}
\title{\vspace*{-0.5em}\LARGE\bf Admissible heuristics\\
  for obstacle clearance optimization objectives}
\date{}
\author{Marlin P.\ Strub and Jonathan D.\ Gammell\\
  Estimation, Search, and Planning (ESP) Group\\
  Oxford Robotics Institute (ORI)\\
  University of Oxford\\
  \texttt{\small(mstrub|gammell)@robots.ox.ac.uk}}
\begin{document}

\maketitle

\begin{abstract}
  Obstacle clearance in state space is an important optimization objective in
  path planning because it can result in safe paths. This technical report
  presents admissible solution- and path-cost heuristics for this objective,
  which can be used to improve the performance of informed path planning
  algorithms.
\end{abstract}

\section{Preliminaries}%
\label{sec:preliminaries}

Let \( X \) be a state space, \( \invstates \subseteq \states \) be the subset
of invalid states, and \( \valstates \coloneqq \states \setminus \invstates \)
be the subset of valid states. Let \( \path*{} \colon [0, l] \to \valstates \)
be a path, i.e., a continuous function parameterized by its arc length,
\( \arclength < \infty \), and let \( \paths \) denote the set of all valid paths. Let
\( \clearance*{} \colon \valstates \to (0, \infty) \) be the clearance of a
valid state and let \( \cost*{} \colon \paths \to [0, \infty) \) be the
reciprocal clearance cost of a path,
\begin{align*}
  \clearance{\state} &\coloneqq \min_{\state[][\prime] \in X_{\text{invalid}}} \norm{\state - \state[][\prime]} & \cost{\path*{}} &\coloneqq \int_{0}^{\arclength}
  \frac{1}{\clearance{\path{\patharg}}} \diff{\patharg}.\stepcounter{equation}\tag{\theequation}\label{eq:reciprocal_obstacle_clearance_cost}
\end{align*}
It is assumed that no state on a path of interest has clearance of exactly
zero.

\begin{lemma}[An upper bound on state clearance]\label{thm:state-clearance}
  Let \( \path*{} \in \paths \) be a path with arc length \( \arclength \). Let
  \( \path{\patharg[1]} \in \states \) be the state
  \( 0 \leq \patharg[1] \leq \arclength \) along this path, and let \( \dist[1] \) be
  the known clearance of this state,
  \( \dist[1] \coloneqq \clearance{\path{\patharg[1]}} \). The clearance of any
  state on the same path, \( \clearance{\path{\patharg}} \), is then upper
  bounded by
\begin{align*}
  \clearance{\path{\patharg}} \leq \dist[1] + \abs{\patharg[1] - \patharg}.\stepcounter{equation}\tag{\theequation}\label{eq:upper_bound_clearance}
\end{align*}
\end{lemma}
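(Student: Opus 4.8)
The plan is to use the fact that clearance is defined as the distance to the nearest invalid state, combined with a triangle-inequality argument along the path.

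The plan is to exploit the definition of clearance as the distance to the nearest invalid state, combined with the triangle inequality and the arc-length parameterization of $\path*{}$.

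First I would fix an invalid state witnessing the clearance at $\path{\patharg[1]}$. Since $\dist[1] = \clearance{\path{\patharg[1]}} = \min_{\state[][\prime] \in \invstates} \norm{\path{\patharg[1]} - \state[][\prime]}$, there exists some $\state[][\prime] \in \invstates$ with $\norm{\path{\patharg[1]} - \state[][\prime]} = \dist[1]$. The key observation is that this \emph{same} invalid state remains an admissible competitor in the minimization defining the clearance at any other point of the path.

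Applying the definition of clearance at the target state $\path{\patharg}$ to this competitor and then the triangle inequality gives
\begin{align*}
  \clearance{\path{\patharg}} \leq \norm{\path{\patharg} - \state[][\prime]} \leq \norm{\path{\patharg} - \path{\patharg[1]}} + \norm{\path{\patharg[1]} - \state[][\prime]} = \norm{\path{\patharg} - \path{\patharg[1]}} + \dist[1].
\end{align*}
It then remains to bound the Euclidean chord $\norm{\path{\patharg} - \path{\patharg[1]}}$ by the parameter gap $\abs{\patharg - \patharg[1]}$. Because $\path*{}$ is parameterized by arc length, the length of the subpath joining $\path{\patharg[1]}$ and $\path{\patharg}$ is exactly $\abs{\patharg - \patharg[1]}$, and the straight-line distance between two points never exceeds the length of any curve connecting them; hence $\norm{\path{\patharg} - \path{\patharg[1]}} \leq \abs{\patharg - \patharg[1]}$. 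Substituting this into the chain above yields $\clearance{\path{\patharg}} \leq \dist[1] + \abs{\patharg[1] - \patharg}$, as claimed.

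The main obstacle I anticipate is this final chord-versus-arc-length inequality, since it is the only step that genuinely uses the arc-length hypothesis rather than generic metric facts. For a merely continuous (rectifiable) path I would justify it from the definition of arc length as a supremum over inscribed polygons; if one additionally assumes $\path*{}$ is differentiable, the cleaner route is $\norm*{\int_{\patharg[1]}^{\patharg} \dot{\path*{}}(s)\, \diff{s}} \leq \int_{\patharg[1]}^{\patharg} \norm{\dot{\path*{}}(s)}\, \diff{s} = \abs{\patharg - \patharg[1]}$, using that a unit-speed parameterization means $\norm{\dot{\path*{}}(s)} = 1$ (up to reversing the limits of integration when $\patharg < \patharg[1]$). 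A secondary technical point is that the minimum in the clearance definition must actually be attained (e.g., $\invstates$ closed); if it is only an infimum, I would instead choose an infimizing sequence in $\invstates$, apply the same triangle-inequality bound to each term, and pass to the limit.
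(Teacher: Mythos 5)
Your proof is correct and follows essentially the same route as the paper's: fix the invalid state witnessing \( \dist[1] \), use it as a competitor in the clearance minimization at \( \path{\patharg} \), and bound the chord \( \norm{\path{\patharg} - \path{\patharg[1]}} \) by the arc-length gap \( \abs{\patharg[1] - \patharg} \) via the triangle inequality. Your treatment is in fact somewhat more careful than the paper's, which silently assumes the minimum is attained and leaves the chord-versus-arc-length and triangle-inequality steps implicit.
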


\begin{proof}
  (Figure~\ref{fig:lemma-1}) Let \( \state[1][\prime] \in X_{\text{invalid}} \)
  be one of the closest invalid states of the state
  \( \state[1] \coloneqq \path{\patharg[1]} \),
  \begin{align*}
    \state[1][\prime] \coloneqq \mathop{\arg\min}_{\state[][\prime] \in X_{\text{invalid}}}\norm{\state[1] - \state[][\prime]} \,\implies\, \dist[1] = \norm{\state[1] - \state[1][\prime]}.
  \end{align*}
  Because \( \path{\patharg} \) is parametrized by arc length, any state on the
  path is at most \( \abs{\patharg[1] - \patharg} \) away from the state
  \( \state[1] \) and therefore at most
  \( \norm{\state[1] - \state[1][\prime]} + \abs{\patharg[1] - \patharg} \)
  away from the state \( \state[1][\prime] \). Since \( \state[1][\prime] \) is
  an invalid state, the distance \( \dist[1] + \abs{\patharg[1] - \patharg} \)
  provides an upper bound on the true clearance of any state on the path.
\end{proof}

\begin{figure}[t]
  \centering
  \begin{tikzpicture}[ scale = 4, > = latex,
    arc length/.style={densely dotted, transform canvas={yshift=#1}},
    dist arrow/.style={->, densely dotted},
    dist label/.style={below, inner sep = 2pt, sloped}]

    % Draw and label the invalid states.
    \draw [obstacle] plot [smooth cycle, tension = 1.2]
                     coordinates {(-0.32, 0.17) (0.18, -0.13) (0.08, 0.57)};
    \node (xinv) [inner sep = 0.0pt] at (-0.15, -0.08) {};
    \node (xinvtext) [below left = 8pt and -5pt of xinv, inner sep = 2pt] {\( \invstates \)};
    \draw (xinvtext) -- (xinv);

    % Draw and label the start and end states of the path.
    \node (x1) [state] at (0.5, 0.7) {};
    \node (x2) [state] at (2.5, -0.1) {};
    \node (x1text) [below left = 8pt and -5pt of x1, inner sep = 2pt] {\( \path{0}\;\, \)};
    \node (x2text) [above left = 8pt and -5pt of x2, inner sep = 2pt] {\( \path{l}\;\, \)};
    \draw (x1text) -- (x1);
    \draw (x2text) -- (x2);

    % Draw and label the path.
    \draw [path] plot [smooth, tension = 1.1]
    coordinates {
      (x1)
      ($(x1) + (0.7, 0.1)$)
      ($(x2) - (0.8, -0.2)$)
      (x2)
    };

    % Draw and label the path arguments.
    \begin{scope}
      \clip (0.524, 1.0) rectangle (1.405, 0.0);
      \draw [fixed point arithmetic,
             postaction = {decoration = {markings,
                 mark = at position 0.439 with {\arrow{>}}}, decorate},
             arc length={-3pt}] plot [smooth, tension = 1.1, thick]
                                coordinates {(x1) ($(x1) + (0.695, 0.105)$) ($(x2) - (0.83, -0.2)$) (x2)};
    \end{scope}
    \node at (0.75, 0.72) {\( \patharg[1] \)};
    \begin{scope}
      \clip (0.0, 1.0) rectangle (1.615, 0.0);
      \draw [fixed point arithmetic,
             postaction = {decoration = {markings,
                 mark = at position 0.615 with {\arrow{>}}}, decorate},
             arc length={ 3pt}] plot [smooth, tension = 1.12, thick]
                                coordinates {(x1) ($(x1) + (0.715, 0.090)$) ($(x2) - (0.778, -0.180)$) (x2)};
    \end{scope}
    \node at (0.75, 0.94) {\( \patharg \)};

    % Draw the states on the path and their clearances.
    \node (t)  [state] at (1.611, 0.171)  {};
    \node (t1) [state] at (1.435, 0.5) {};
    \node (tprime) [state] at (1.850, 0.575) {};
    \node (ttext)  [below left = 8pt and -5pt of t,  inner sep = 2pt] {\( \path{\patharg}\;\, \)};
    \node (t1text) [below left = 8pt and -5pt of t1, inner sep = 2pt] {\( \path{\patharg[1]} \)};
    \draw (ttext)  -- (t);
    \draw (t1text) -- (t1);
    \draw [dist arrow] (t1) -- node [dist label] {\( \dist[1] \)} (0.24, 0.28);
    \draw [dist arrow] (t1) -- node [dist label] {\( \abs{\patharg[1] - \patharg} \)} (tprime);
    \draw [dist arrow] (t) -- node [dist label] {\( \clearance{\path{\patharg}} \)} (0.255, 0.09);

    % Draw and label the path.
    \node (path) [inner sep = 0pt] at (2.0, -0.045) {};
    \node (pathtext) [below left = 8pt and 2pt of path, inner sep = 2pt] {\( \path*{} \)};
    \draw (path) -- (pathtext);

    % Draw the shaded disk that includes all states not farther than | t_1 - t | from t_1.
    \begin{scope}[on background layer]
      \filldraw [espgray] (t1) circle (0.422);
    \end{scope}
  \end{tikzpicture}
  \caption{An illustration of Lemma~\ref{thm:state-clearance}. Any state
    \( \path{\patharg} \) on the path \( \path*{} \) can not be farther from
    the state \( \path{\patharg[1]} \) than \( \abs{\patharg[1] - \patharg} \)
    and must be within the light gray shaded area. The clearance
    \( \clearance{\path{\patharg}} \) of any state on the path can therefore
    not be larger than the clearance
    \( \dist[1] \coloneqq \clearance{\path{\patharg[1]}} \) of the state
    \( \path{\patharg[1]} \) plus the distance from that state, i.e.,
    \( \clearance{\path{\patharg}} \leq \dist[1] + \abs{\patharg[1] -
      \patharg}. \)}%
  \label{fig:lemma-1}
\end{figure}

%%% Local Variables:
%%% mode: latex
%%% TeX-master: "../main"
%%% End:

\section{Solution-cost heuristics}%
\label{sec:solution-cost-heuristics}

This section presents two lower bounds on the cost of an optimal path between
two states. These bounds can be used as admissible cost-to-go heuristics in
informed planners, such as \accite{BIT*}{gammell_ijrr2020},
\accite{ABIT*}{strub_icra2020a}, and \accite{AIT*}{strub_icra2020b}, if a lower
bound on the arc length of the optimal path between two states is known, e.g.,
the Euclidean distance.

\begin{lemma}[An admissible solution-cost heuristic when the clearance of one
  end state is known]\label{thm:end-state}
  Let \( \path*{} \in \paths \) be a path whose arc length, \( \arclength \),
  is lower bounded by \( \lbarclength \leq \arclength \). Let the clearance of
  the start or goal state of this path be known and denoted by
  \( \dist[1] \coloneqq \clearance{\path{0 \text{ or } \arclength}} \). The
  reciprocal clearance cost \( \cost{\path*{}} \) of the path \( \path*{} \) is
  then lower bounded by
  \begin{align*}
    \cost{\path*{}} \geq \ln\left(\frac{\dist[1] + \lbarclength}{\dist[1]} \right).
  \end{align*}
\end{lemma}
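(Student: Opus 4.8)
The plan is to reduce the claim to a single application of Lemma~\ref{thm:state-clearance} at the known end state, followed by a direct integration and a monotonicity argument in the arc length. I would first assume without loss of generality that the known clearance is that of the start state, i.e.\ \( \patharg[1] = 0 \) and \( \dist[1] = \clearance{\path{0}} \); the goal case \( \patharg[1] = \arclength \) is handled identically after the substitution \( \patharg \mapsto \arclength - \patharg \), which leaves the integral unchanged.

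Invoking Lemma~\ref{thm:state-clearance} with \( \patharg[1] = 0 \) gives, for every \( \patharg \in [0, \arclength] \), the pointwise bound \( \clearance{\path{\patharg}} \leq \dist[1] + \patharg \). Because clearances are assumed strictly positive, taking reciprocals reverses the inequality and turns this upper bound on the clearance into a lower bound on the cost integrand,
\[
  \frac{1}{\clearance{\path{\patharg}}} \;\geq\; \frac{1}{\dist[1] + \patharg}.
\]
Integrating this over \( [0, \arclength] \) and using monotonicity of the integral then yields
\[
  \cost{\path*{}} = \int_{0}^{\arclength} \frac{1}{\clearance{\path{\patharg}}} \diff{\patharg}
  \;\geq\; \int_{0}^{\arclength} \frac{1}{\dist[1] + \patharg} \diff{\patharg}
  = \ln\!\left(\frac{\dist[1] + \arclength}{\dist[1]}\right).
\]

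The final step is to weaken \( \arclength \) to its lower bound. Since \( \tfrac{\dist[1] + \arclength}{\dist[1]} = 1 + \arclength / \dist[1] \) is increasing in \( \arclength \) and \( \ln \) is monotone, replacing \( \arclength \) by \( \lbarclength \leq \arclength \) can only decrease the right-hand side, giving the stated bound \( \cost{\path*{}} \geq \ln\!\left(\tfrac{\dist[1] + \lbarclength}{\dist[1]}\right) \).

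I do not expect a genuine obstacle here: the argument is essentially a one-line reduction to Lemma~\ref{thm:state-clearance} plus an elementary antiderivative. The only point requiring care is the direction of the reciprocal inequality — Lemma~\ref{thm:state-clearance} bounds the clearance \emph{from above}, and this becomes a \emph{lower} bound on \( 1/\clearance{\path{\patharg}} \) precisely because the clearance is strictly positive, which is exactly the standing assumption that no state on the path has zero clearance.
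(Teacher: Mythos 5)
Your proof is correct and follows essentially the same route as the paper: apply Lemma~\ref{thm:state-clearance} at the known end state, bound the integrand by \( 1/(\dist[1] + \patharg) \), integrate, and then use monotonicity of \( \ln \) to pass from \( \arclength \) to \( \lbarclength \). The only cosmetic difference is that the paper writes out the goal-state case as a second explicit integration, whereas you dispatch it by the substitution \( \patharg \mapsto \arclength - \patharg \); both are fine.
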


\begin{proof}
  The bounds are computed by setting \( \patharg[1] = 0 \) or
  \( \patharg[1] = \arclength \) in
  Lemma~\ref{thm:state-clearance} and replacing the clearance function in the
  integrand of the reciprocal clearance
  cost~\eqref{eq:reciprocal_obstacle_clearance_cost} with the upper bound on
  the clearance~\eqref{eq:upper_bound_clearance}. First let
  \( \patharg[1] = 0 \) (Figure~\ref{fig:start-state}). Then by
  Lemma~\ref{thm:state-clearance}
  \begin{align*}
    \clearance{\path{\patharg}} \leq \dist[1] + t,
  \end{align*}
  and the lower bound on the cost is
  \begin{align*}
    \cost{\path*{}} = \int_{0}^{\arclength} \frac{1}{\clearance{\path{\patharg}}}\diff{\patharg}
              &\geq \int_{0}^{\arclength} \frac{1}{\dist[1] + \patharg} \diff{\patharg} \\
              &= {\left[ \ln\left( \dist[1] + \patharg \right) \right]}_{0}^{\arclength} \\
              &= \ln\left( \dist[1] + \arclength \right) - \ln\left( \dist[1] \right) \\
              &= \ln\left( \frac{\dist[1] + \arclength}{\dist[1]} \right) \\
              &\geq \ln\left( \frac{\dist[1] + \lbarclength}{\dist[1]} \right).
  \end{align*}
  Similarly, let \( \patharg[1] = \arclength \) (Figure~\ref{fig:end-state}). Then by
  Lemma~\ref{thm:state-clearance}
  \begin{align*}
    \clearance{\path{\patharg}} \leq \dist[1] + \arclength - \patharg,
  \end{align*}
  and the lower bound on the cost is again
  \begin{align*}
    \cost{\path*{}} = \int_{0}^{\arclength} \frac{1}{\clearance{\path{\patharg}}}\diff{\patharg}
              &\geq \int_{0}^{\arclength} \frac{1}{\dist[1] + \arclength - \patharg} \diff{\patharg} \\
              &= {\left[ - \ln\left( \dist[1] + \arclength - \patharg \right) \right]}_{0}^{\arclength} \\
              &= -\ln\left( \dist[1] \right) - \left( - \ln\left( \dist[1]
                + \arclength \right) \right) \\
              &= \ln\left( \frac{\dist[1] + \arclength}{\dist[1]} \right) \\
              &\geq \ln\left( \frac{\dist[1] + \lbarclength}{\dist[1]} \right).
  \end{align*}
\end{proof}

\begin{theorem}[An admissible solution-cost heuristic when the clearance of both
  end states is known]\label{thm:end-states}
  Let \( \path*{} \) be a path whose arc length, \( \arclength \), is lower
  bounded by \( \lbarclength \leq \arclength \). Let
  \( \path{\patharg[1] = 0} \in X \) and
  \( \path{\patharg[2] = \arclength} \in X \) be its start and goal states with
  clearances \( \dist[1] \) and \( \dist[2] \), respectively. The reciprocal
  clearance cost \( \cost{\path*{}} \) of the path \( \path*{} \) is then lower
  bounded by
  \begin{align*}
    \cost{\path*{}} \geq \ln\left( \frac{{\left( \dist[1] + \dist[2] + \lbarclength \right)}^{2}}{4\dist[1]\dist[2]} \right).
  \end{align*}
\end{theorem}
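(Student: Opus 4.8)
The plan is to sharpen the argument of Lemma~\ref{thm:end-state} by exploiting \emph{both} endpoints at once. Applying Lemma~\ref{thm:state-clearance} once with \( \patharg[1] = 0 \) and once with \( \patharg[1] = \arclength \) gives two admissible upper bounds on the clearance at every point, \( \clearance{\path{\patharg}} \leq \dist[1] + \patharg \) and \( \clearance{\path{\patharg}} \leq \dist[2] + \arclength - \patharg \). Since both hold pointwise, so does their minimum, and the reciprocal integrand is therefore bounded below by
\begin{align*}
  \frac{1}{\clearance{\path{\patharg}}} \geq \frac{1}{\min\!\left\{ \dist[1] + \patharg,\; \dist[2] + \arclength - \patharg \right\}}.
\end{align*}
I would integrate this stronger bound over \( [0, \arclength] \) first and only weaken \( \arclength \) to \( \lbarclength \) at the very end.

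The two linear functions \( \dist[1] + \patharg \) (increasing) and \( \dist[2] + \arclength - \patharg \) (decreasing) meet at \( \patharg[\ast] \coloneqq (\arclength + \dist[2] - \dist[1])/2 \), where both equal \( (\dist[1] + \dist[2] + \arclength)/2 \). Before \( \patharg[\ast] \) the increasing function is the smaller of the two and after it the decreasing one is, so the integral splits cleanly at \( \patharg[\ast] \) into two pieces, each a standard logarithmic integral of the same form as in Lemma~\ref{thm:end-state}:
\begin{align*}
  \cost{\path*{}} &\geq \int_{0}^{\patharg[\ast]} \frac{\diff{\patharg}}{\dist[1] + \patharg} + \int_{\patharg[\ast]}^{\arclength} \frac{\diff{\patharg}}{\dist[2] + \arclength - \patharg} \\
  &= \ln\!\left( \frac{\dist[1] + \dist[2] + \arclength}{2\dist[1]} \right) + \ln\!\left( \frac{\dist[1] + \dist[2] + \arclength}{2\dist[2]} \right).
\end{align*}
Combining the two logarithms and collecting the factor of four yields exactly \( \ln\!\left( (\dist[1] + \dist[2] + \arclength)^{2} / (4\dist[1]\dist[2]) \right) \).

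The one point that genuinely needs checking — and where I expect the only real subtlety — is that the crossover \( \patharg[\ast] \) actually lies in \( [0, \arclength] \); otherwise the split is invalid and a single linear bound dominates throughout. Fortunately this follows from Lemma~\ref{thm:state-clearance} applied to the endpoints themselves: taking \( \patharg = \arclength \) in the bound seeded at \( \patharg[1] = 0 \) gives \( \dist[2] \leq \dist[1] + \arclength \), and symmetrically \( \dist[1] \leq \dist[2] + \arclength \), so \( \abs{\dist[1] - \dist[2]} \leq \arclength \), which is precisely the condition \( 0 \leq \patharg[\ast] \leq \arclength \). Finally, since \( (\dist[1] + \dist[2] + \arclength)^{2} / (4\dist[1]\dist[2]) \) is increasing in \( \arclength \), its logarithm is monotone and the bound only decreases when \( \arclength \) is replaced by \( \lbarclength \leq \arclength \), giving the claimed inequality. (By the arithmetic–geometric mean inequality the resulting argument is at least one, so the bound is nonnegative, as a reciprocal-clearance cost must be.)
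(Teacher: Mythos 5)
Your proposal is correct and follows essentially the same route as the paper's own proof: bounding the clearance by the pointwise minimum of the two linear bounds from Lemma~\ref{thm:state-clearance}, splitting the integral at the crossover point, verifying via the endpoint clearances that the crossover lies in \( [0, \arclength] \), and only then relaxing \( \arclength \) to \( \lbarclength \). The added remarks on monotonicity in \( \arclength \) and the AM--GM nonnegativity check are sound but not needed beyond what the paper already does implicitly.
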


\begin{figure}[t]
  \centering
  \begin{subfigure}{0.33\textwidth}
    \begin{tikzpicture}
  \begin{axis} [
    width = \textwidth,
    height = \textwidth,
    ymin = 1.0,
    ymax = 4.5,
    xmin = -0.2,
    xmax = 3.3,
    xtick = {0, 1.5, 3},
    xticklabels = {\( 0 \), \( l/2 \), \( l \)},
    ytick = {1.5},
    yticklabels = {\( \dist[1] \)},
    axis x line = bottom,
    axis y line = left,
    axis line style = {-latex},
    xlabel = {Distance along path (\( t \))},
    ylabel = {Clearance (\( \clearance*{} \))},
    axis on top
    ]
    % Draw the start and goal states.
    \node (state1) [state] at (0.0, 1.5) {};

    % Draw the clearance of the path.
    \addplot [
      smooth,
      tension = 0.8,
      thick,
      name path = clearance
    ] coordinates {
      (0.0, 1.5)
      (1.0, 2.0)
      (2.0, 1.9)
      (3.0, 2.2)
    } node [below = 5pt, pos = 0.3] {\( \clearance{\pathalias{\patharg}} \)};

    % Draw the upper bound on the clearance.
    \addplot [
      samples = 2,
      domain = 0:3,
      name path = bound1,
      bounds
    ] {
      x + 1.5
    } node [pos = 0.95, fill = white] {\( \dist[1] + \patharg \)};

    % Fill the area between the clearance and the upper bound.
    \addplot [
      fill = espgray
    ] fill between [of = bound1 and clearance];
  \end{axis}
\end{tikzpicture}

%%% Local Variables:
%%% mode: latex
%%% TeX-master: "../main"
%%% End:
  \caption{}%
  \label{fig:start-state}
\end{subfigure}
\begin{subfigure}{0.32\textwidth}
  \begin{tikzpicture}
  \begin{axis} [
    width = \textwidth,
    height = \textwidth,
    ymin = 1.0,
    ymax = 4.5,
    xmin = -0.2,
    xmax = 3.3,
    xtick = {0, 1.5, 3},
    xticklabels = {\( 0 \), \( l/2 \), \( l \)},
    ytick = {2.2},
    yticklabels = {\( \dist[1] \)},
    axis x line = bottom,
    axis y line = left,
    axis line style = {-latex},
    xlabel = {Distance along path (\( t \))},
    ylabel = {Clearance (\( \clearance*{} \))},
    axis on top
    ]
    % Draw the start and goal states.
    \node (state2) [state] at (3.0, 2.2) {};

    % Draw the clearance of the path.
    \addplot [
      smooth,
      tension = 0.8,
      thick,
      name path = clearance
    ] coordinates {
      (0.0, 1.5)
      (1.0, 2.0)
      (2.0, 1.9)
      (3.0, 2.2)
    } node [below = 5pt, pos = 0.3] {\( \clearance{\pathalias{\patharg}} \)};

    % Draw the upper bound on the clearance.
    \addplot [
      samples = 2,
      domain = 0:3,
      name path = bound2,
      bounds
    ] {
      -(x - 3) + 2.2
    } node [pos = 0.28, fill = white] {\( \dist[1] + l - \patharg \)};

    % Fill the area between the clearance and the upper bound.
    \addplot [
      fill = espgray
    ] fill between [of = bound2 and clearance];
  \end{axis}
\end{tikzpicture}

%%% Local Variables:
%%% mode: latex
%%% TeX-master: "../main"
%%% End:
  \caption{}%
  \label{fig:end-state}
\end{subfigure}
\begin{subfigure}{0.32\textwidth}
  \begin{tikzpicture}
  \begin{axis} [
    width = \textwidth,
    height = \textwidth,
    ymin = 1.0,
    ymax = 4.5,
    xmin = -0.2,
    xmax = 3.3,
    xtick = {0, 1.85, 3},
    xticklabels = {\( 0 \), \( \patharg[\text{e}] \), \( l \)},
    ytick = {1.5, 2.2},
    yticklabels = {\( \dist[1] \), \( \dist[2] \)},
    axis x line = bottom,
    axis y line = left,
    axis line style = {-latex},
    xlabel = {Distance along path (\( t \))},
    ylabel = {Clearance (\( \clearance*{} \))},
    axis on top
    ]
    % Draw the start and goal states.
    \node (state1) [state] at (0.0, 1.5) {};
    \node (state2) [state] at (3.0, 2.2) {};
    \coordinate (statee) at (1.85, 3.35) {};
    \draw [densely dotted] (statee) -- (1.85, 0.0);

    % Draw the clearance of the path.
    \addplot [
      smooth,
      tension = 0.8,
      thick,
      name path = clearance
    ] coordinates {
      (0.0, 1.5)
      (1.0, 2.0)
      (2.0, 1.9)
      (3.0, 2.2)
    } node [below = 5pt, pos = 0.3] {\( \clearance{\pathalias{\patharg}} \)};

    % Draw the upper bound on the clearance.
    \addplot [
      samples = 2,
      domain = 0:3,
      name path = bound1,
      bounds
    ] {
      x + 1.5
    } node [pos = 0.95, fill = white] {\( \dist[1] + \patharg \)};

    % Draw the upper bound on the clearance.
    \addplot [
      samples = 2,
      domain = 0:3,
      name path = bound2,
      bounds
    ] {
      -(x - 3) + 2.2
    } node [pos = 0.29, fill = white] {\( \dist[2] + l - \patharg \)};

    % Fill the area between the clearance and the second upper bound.
    \addplot [
      fill = espgray
      ] fill between [of = bound2 and clearance, soft clip = {domain = 1.85:3}];

    % Fill the area between the clearance and the first upper bound.
    \addplot [
      fill = espgray
    ] fill between [of = bound1 and clearance, soft clip = {domain = 0.01:1.85}];

  \end{axis}
\end{tikzpicture}

%%% Local Variables:
%%% mode: latex
%%% TeX-master: "../main"
%%% End:
    \caption{}%
  \label{fig:end-states}
\end{subfigure}
\caption{Illustrations of the solution-cost heuristics. The gray area indicates
  the error of the heuristic.}%
\label{fig:solution-cost-heuristics}
\end{figure}
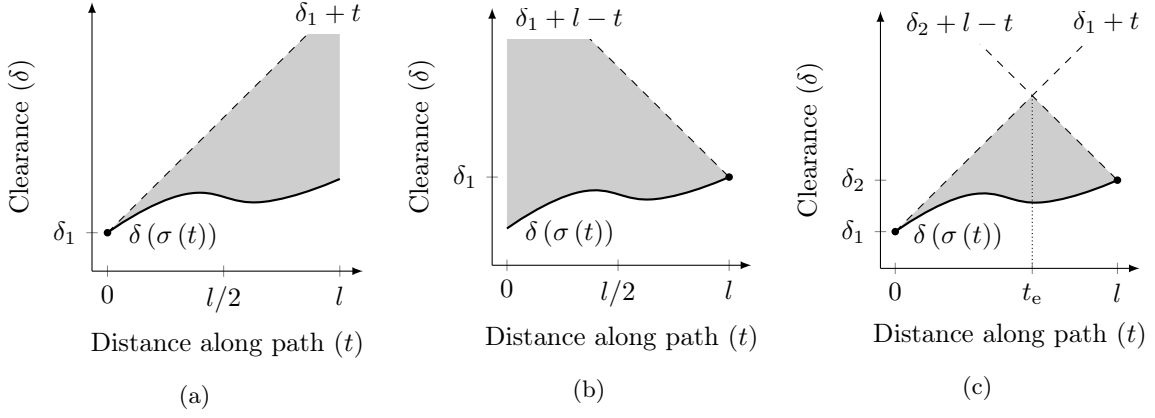

%%% Local Variables:
%%% mode: latex
%%% TeX-master: "../main"
%%% End:

\begin{proof}
  (Figure~\ref{fig:end-states}) The clearance of any state
  \( \path{\patharg}, 0 \leq \patharg \leq l, \) on the path is upper bounded
  by both clearances \( \dist[1] \) and \( \dist[2] \) according to
  Lemma~\ref{thm:state-clearance},
  \begin{align*}
    \clearance{\path{\patharg}} &\leq \min\{\dist[1] + \abs{\patharg[1] -
    \patharg}, \dist[2] + \abs{\patharg[2] - \patharg} \}\\ &= \min\{ \dist[1] + \patharg,  \dist[2] + \arclength - \patharg \}
  \end{align*}
  A lower bound on the reciprocal clearance cost \( \cost{\path*{}} \) of the
  path \( \path*{} \) can therefore be computed by
  \begin{align*}
    \cost{\path*{}} &= \int_{0}^{\arclength} \frac{1}{\clearance{\path{\patharg}}} \diff{\patharg} \geq \int_{0}^{\arclength} \frac{1}{\min\left\{ \dist[1] +
                      \patharg, \dist[2] + \arclength - \patharg \right\}} \diff{\patharg}.\stepcounter{equation}\tag{\theequation}\label{eq:integral_with_min}
  \end{align*}
  Since \( \dist[1] + \patharg \) is strictly monotonically increasing and
  \( \dist[2] + \arclength - \patharg \) is strictly monotonically decreasing, the two
  bounds must intersect at some point, \( \patharg[\text{e}] \),
  \begin{align*}
    \dist[1] + \patharg[\text{e}] = \dist[2] + \arclength - \patharg[\text{e}] \quad\implies\quad
    \patharg[\text{e}] = \frac{\dist[2] - \dist[1] + \arclength}{2}.
  \end{align*}
  This intersection must lie within the integration limits because by
  Lemma~\ref{thm:state-clearance} we have
  \begin{align*}
    \clearance{\path{\patharg}} \leq \dist[2] + \arclength - t \,\implies\, \clearance{\path{0}} \leq \dist[2] + \arclength \,\implies\, \dist[1] \leq \dist[2] + \arclength \,\implies\, \patharg[\text{e}] \geq 0\phantom{.}\\
    \clearance{\path{\patharg}} \leq \dist[1] + t \,\implies\, \clearance{\path{\arclength}} \leq \dist[1] + \arclength \,\implies\, \dist[2] \leq \dist[1] + \arclength \,\implies\, \dist[2] - \dist[1] \leq \arclength \,\implies\, \patharg[\text{e}] \leq l.
  \end{align*}
  The minimum in~\eqref{eq:integral_with_min} can therefore be written as
  \begin{align*}
    \min\left\{\dist[1] + \patharg, \dist[2] + \arclength - \patharg \right\} =
    \begin{cases}
      \dist[1] + \patharg     & \text{if } \patharg \leq \patharg[\text{e}] \\
      \dist[2] + \arclength - \patharg & \text{otherwise},
    \end{cases}
  \end{align*}
  and the definite integral~\eqref{eq:integral_with_min} can be evaluated to
  \begin{align*}
    \cost{\path*{}} = \int_{0}^{\arclength} \frac{1}{\clearance{\path{\patharg}}}\diff{\patharg} &\geq \int_{0}^{\arclength} \frac{1}{\min\left\{ \dist[1] + \patharg, \dist[2] + \arclength - \patharg \right\}}\\
              &= \int_{0}^{\patharg[\text{e}]} \frac{1}{\dist[1] + \patharg} \diff{\patharg}
                + \int_{\patharg[\text{e}]}^{\arclength} \frac{1}{\dist[2] + \arclength - \patharg} \diff{\patharg} \\
              &= {\left[ \ln\left( \dist[1] + \patharg \right) \right]}_{0}^{\patharg[\text{e}]}
                + {\left[ -\ln\left( \dist[2] + \arclength - \patharg \right) \right]}_{\patharg[\text{e}]}^{\arclength} \\
              &= \ln\left( \dist[1] + \frac{\dist[2] - \dist[1] + \arclength}{2} \right) - \ln\left( \dist[1] \right) + \left( -\ln\left( \dist[2] \right) - \left(- \ln\left( \dist[2] + \arclength - \frac{\dist[2] - \dist[1] + \arclength}{2} \right) \right) \right)\\
              &= \ln\left( \frac{\dist[1] + \dist[2] + \arclength}{2\dist[1]} \right) + \ln\left( \frac{\dist[1] + \dist[2] + \arclength}{2\dist[2]} \right)\\
              &= \ln\left( \frac{{\left(\dist[1] + \dist[2] + \arclength\right)}^{2}}{4\dist[1]\dist[2]} \right) \\
              &\geq \ln\left( \frac{{\left(\dist[1] + \dist[2] + \lbarclength\right)}^{2}}{4\dist[1]\dist[2]} \right).
  \end{align*}
\end{proof}

\section{Path-cost heuristics}%
\label{sec:path-cost-heuristics}

Informed sampling-based planning algorithms, such as \ac{BIT*}, \ac{ABIT*}, and
\ac{AIT*}, also use path-cost heuristics, i.e., estimates of the unknown cost
of known paths, for example when ordering their search queues. The
solution-cost heuristics of Section~\ref{sec:solution-cost-heuristics} can be
made more accurate for known paths by sampling additional states along the path
and computing their clearance. This improves performance if the evaluation of
the true edge cost is computationally expensive.

\begin{lemma}[An admissible path-cost heuristic when the clearance of any state
  on the path is known]\label{thm:single-known-state}
  Let \( \path*{} \in \paths \) be a path with arc length \( \arclength \). Let
  \( \path{\patharg[1]} \in \states \) be the state
  \( 0 \leq \patharg[1] \leq \arclength \) along this path, and let
  \( \dist[1] \coloneqq \clearance{\path{\patharg[1]}} \) be the known
  clearance of this state. The reciprocal clearance cost \( \cost{\path*{}} \)
  of the path \( \path*{} \) is then lower bounded by
  \begin{align*}
     \cost{\path*{}} \geq \ln\left( \frac{\dist[1] + \patharg[1]}{\dist[1]} \right) + \ln\left(\frac{\dist[1] + \arclength - \patharg[1]}{\dist[1]} \right).\stepcounter{equation}\tag{\theequation}\label{eq:lower-bound-single-state}
  \end{align*}
\end{lemma}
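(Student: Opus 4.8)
The plan is to split the cost integral at the known parameter \( \patharg[1] \) and treat each of the two resulting pieces with the single-end argument already established in Lemma~\ref{thm:end-state}. The key observation is that the interior state \( \path{\patharg[1]} \) acts simultaneously as the goal of the restriction of \( \path*{} \) to \( [0, \patharg[1]] \) and as the start of its restriction to \( [\patharg[1], \arclength] \), so each sub-path is an instance of the one-end-known case.

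First I would apply Lemma~\ref{thm:state-clearance} to resolve the absolute value into two one-sided clearance bounds. For \( \patharg \leq \patharg[1] \) this gives \( \clearance{\path{\patharg}} \leq \dist[1] + \patharg[1] - \patharg \), and for \( \patharg \geq \patharg[1] \) it gives \( \clearance{\path{\patharg}} \leq \dist[1] + \patharg - \patharg[1] \). Substituting these into the reciprocal clearance cost~\eqref{eq:reciprocal_obstacle_clearance_cost} and splitting the domain of integration at \( \patharg[1] \) yields
\begin{align*}
  \cost{\path*{}} \geq \int_{0}^{\patharg[1]} \frac{1}{\dist[1] + \patharg[1] - \patharg} \diff{\patharg} + \int_{\patharg[1]}^{\arclength} \frac{1}{\dist[1] + \patharg - \patharg[1]} \diff{\patharg}.
\end{align*}

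Each of these definite integrals has a logarithmic antiderivative and is evaluated exactly as in the proof of Lemma~\ref{thm:end-state}: the first contributes \( \ln\!\left( (\dist[1] + \patharg[1]) / \dist[1] \right) \) and the second contributes \( \ln\!\left( (\dist[1] + \arclength - \patharg[1]) / \dist[1] \right) \). Their sum is precisely the claimed bound~\eqref{eq:lower-bound-single-state}.

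I expect no genuinely hard step here; the single idea is that an interior known state partitions the path into two sub-paths, each reducible to the single-end case of Lemma~\ref{thm:end-state}. The only point worth noting is that this bound is stated in terms of the true arc length \( \arclength \) rather than a lower bound \( \lbarclength \), since the arc length of a known path is itself known; consequently no final relaxation to \( \lbarclength \) is required, which is the one place the argument differs from the solution-cost lemmas.
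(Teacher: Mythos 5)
Your proposal is correct and follows essentially the same route as the paper: bound the clearance by \( \dist[1] + \abs{\patharg[1] - \patharg} \) via Lemma~\ref{thm:state-clearance}, split the integral at \( \patharg[1] \), and evaluate the two logarithmic integrals. Your framing of the two pieces as instances of Lemma~\ref{thm:end-state} (rather than computing the antiderivatives inline, as the paper does) is an equally valid presentation of the identical computation, and is in fact how the paper itself later treats the segments in Theorem~\ref{thm:arbitrary-num-states}.
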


\begin{proof}
  (Figure~\ref{fig:single-state}) The lower bound is computed by replacing the
  clearance function in the integrand of the reciprocal clearance
  cost~\eqref{eq:reciprocal_obstacle_clearance_cost} with the upper bound on
  the clearance~\eqref{eq:upper_bound_clearance},
  \begin{align*}
    \cost{\path*{}} = \int_{0}^{\arclength} \frac{1}{\clearance{\path{\patharg}}}\diff{\patharg}
              &\geq \int_{0}^{\arclength} \frac{1}{\dist[1] + \abs{\patharg[1] - \patharg}} \diff{\patharg} \\
              &= \int_{0}^{\patharg[1]} \frac{1}{\dist[1] + \patharg[1] - \patharg} \diff{\patharg}
                  + \int_{\patharg[1]}^{\arclength} \frac{1}{\dist[1] + \patharg - \patharg[1]} \diff{\patharg}. \\
              &= {\left[ -\ln\left( \dist[1] + \patharg[1] - \patharg \right) \right]}_{0}^{\patharg[1]}
                  + {\left[ \ln\left( \dist[1] + \patharg - \patharg[1] \right) \right]}_{\patharg[1]}^{\arclength} \\
              &= -\ln\left( \dist[1] \right) - \left(-\ln\left( \dist[1] + \patharg[1] \right) \right)
                 +\ln\left( \dist[1] + \arclength - \patharg[1] \right) - \ln\left( \dist[1] \right) \\
              &= \ln\left( \frac{\dist[1] + \patharg[1]}{\dist[1]} \right) + \ln\left( \frac{\dist[1] + \arclength - \patharg[1]}{\dist[1]} \right).
  \end{align*}
  Note that this reduces to the result of Lemma~\ref{thm:state-clearance} when
  \( \patharg[1] = 0 \) or \( \patharg[1] = \arclength \).
\end{proof}

\begin{theorem}[An admissible path-cost heuristic when the clearance of multiple
  states on the path is known]\label{thm:arbitrary-num-states}
  Let \( \path*{} \in \paths \) be a path with arc length \( \arclength \).
  Let
  \( 0 \leq \patharg[1] < \patharg[2] < \cdots < \patharg[n] \leq \arclength
  \) be a sequence of \( n \) numbers between \( 0 \) and \( \arclength \)
  whose associated states on the path have known clearance,
  \( \dist[i] \coloneqq \clearance{\path{\patharg[i]}} \) for
  \( i = 1, 2, 3, \hdots, n \). The reciprocal clearance cost
  \( \cost{\path*{}} \) of the path \( \path*{} \) is then lower bounded by
  \begin{align*}
     \cost{\path*{}} \geq \ln\left( \frac{\dist[1] + \patharg[1]}{\dist[1]} \right) + \sum_{i = 1}^{n - 1} \ln\left( \frac{{\left( \dist[i] + \dist[i + 1] + \patharg[i + 1] - \patharg[i] \right)}^{2}}{4\dist[i]\dist[i + 1]} \right) +\ln\left( \frac{\dist[n] + \arclength - \patharg[n]}{\dist[n]} \right).
  \end{align*}
\end{theorem}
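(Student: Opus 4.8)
The plan is to exploit additivity of the integral and reduce the claim to the cases already established for zero, one, and two known clearances. I would partition the integration domain $[0,\arclength]$ into the leading segment $[0,\patharg[1]]$, the $n-1$ interior segments $[\patharg[i],\patharg[i+1]]$ for $i=1,\dots,n-1$, and the trailing segment $[\patharg[n],\arclength]$, writing
\begin{align*}
  \cost{\path*{}} = \int_{0}^{\patharg[1]} \frac{1}{\clearance{\path{\patharg}}}\diff{\patharg} + \sum_{i=1}^{n-1}\int_{\patharg[i]}^{\patharg[i+1]} \frac{1}{\clearance{\path{\patharg}}}\diff{\patharg} + \int_{\patharg[n]}^{\arclength} \frac{1}{\clearance{\path{\patharg}}}\diff{\patharg}.
\end{align*}
Each summand is the reciprocal clearance cost of a restriction of $\path*{}$ to a subinterval, and each such restriction is again a valid path parameterized by arc length whose arc length equals the length of the subinterval, so the earlier bounds apply verbatim to it.

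For the leading segment only the clearance $\dist[1]$ at its right endpoint is known, so I would apply Lemma~\ref{thm:end-state} to the subpath on $[0,\patharg[1]]$ of arc length $\patharg[1]$, taking this exact length as the arc-length lower bound. This yields $\ln\bigl(\frac{\dist[1]+\patharg[1]}{\dist[1]}\bigr)$, the first term of the claim. Symmetrically, the trailing segment has only its left-endpoint clearance $\dist[n]$ known; Lemma~\ref{thm:end-state} applied to the subpath on $[\patharg[n],\arclength]$, of arc length $\arclength-\patharg[n]$, gives the final term $\ln\bigl(\frac{\dist[n]+\arclength-\patharg[n]}{\dist[n]}\bigr)$. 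For each interior segment $[\patharg[i],\patharg[i+1]]$ both endpoint clearances $\dist[i]$ and $\dist[i+1]$ are known, so I would apply Theorem~\ref{thm:end-states} to the corresponding subpath, again using its exact arc length $\patharg[i+1]-\patharg[i]$ as the lower bound $\lbarclength$. This produces $\ln\bigl(\frac{{\left(\dist[i]+\dist[i+1]+\patharg[i+1]-\patharg[i]\right)}^{2}}{4\dist[i]\dist[i+1]}\bigr)$, the $i$-th term of the sum, and summing the bounds over all $n+1$ segments recovers the stated inequality.

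I expect no deep obstacle here: the result is essentially a bookkeeping assembly of the preceding cases. The one point to check with care is that the preceding statements are legitimately applicable to each subpath, i.e.\ that restricting an arc-length-parameterized valid path to a subinterval again produces such a path with arc length equal to that subinterval's length, and that using this exact length as the arc-length lower bound introduces no slack into the individual bounds. As an alternative to invoking the earlier statements as black boxes, one could repeat their piecewise integration directly on each segment: the minimum of the two clearance upper bounds supplied by Lemma~\ref{thm:state-clearance} splits each interior integral at the intersection point $\patharg[\text{e}]$ exactly as in the proof of Theorem~\ref{thm:end-states}, and the resulting logarithms assemble segment by segment into the claimed expression.
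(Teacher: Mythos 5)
Your proposal is correct and follows essentially the same route as the paper's proof: splitting the integral at the points of known clearance into \( n+1 \) segments, applying Lemma~\ref{thm:end-state} to the leading and trailing segments and Theorem~\ref{thm:end-states} (with the segment's exact arc length as the lower bound \( \lbarclength \)) to each interior segment, then summing. Your additional remark on checking that a restriction of an arc-length-parameterized path is again such a path is a sensible bit of care the paper leaves implicit, but it does not change the argument.
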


\begin{proof}
  (Figure~\ref{fig:many-states}) The proof follows from
  Lemma~\ref{thm:end-state} and Theorem~\ref{thm:end-states} by splitting the
  clearance cost into \( n + 1 \) segments between the states with known
  clearance,
  \begin{align*}
    \cost{\path*{}} &= \int_{0}^{\arclength} \frac{1}{\clearance{\path{\patharg}}}\diff{\patharg} \\
              &= \int_{0}^{\patharg[1]} \frac{1}{\clearance{\path{\patharg}}}\diff{\patharg} + \int_{\patharg[1]}^{\patharg[2]} \frac{1}{\clearance{\path{\patharg}}}\diff{\patharg} + \cdots + \int_{\patharg[n-1]}^{\patharg[n]} \frac{1}{\clearance{\path{\patharg}}}\diff{\patharg} + \int_{\patharg[n]}^{\arclength} \frac{1}{\clearance{\path{\patharg}}}\diff{\patharg}.
  \end{align*}

  Let the arc lengths of these segments be denoted by
  \begin{align*}
    l_{0} = \patharg[1], l_{1} = \patharg[2] - \patharg[1], \ldots, l_{n - 1} = \patharg[n] - t_{n - 1},
    l_{n} = \arclength - \patharg[n].
  \end{align*}
  The first segment is a path of arc length \( l_{0} \) with known clearance of
  its end state and therefore lower bounded by Lemma~\ref{thm:end-state},
  \begin{align*}
    \int_{0}^{\patharg[1]}
    \frac{1}{\clearance{\path{\patharg}}}\diff{\patharg}
      &\geq \ln\left( \frac{\dist[1] + \arclength_{0}}{\dist[1]} \right) = \ln\left( \frac{\dist[1] + \patharg[1]}{\dist[1]} \right).\stepcounter{equation}\tag{\theequation}\label{eq:first-segment}
  \end{align*}

  Similarly, the last segment is a path of arc length \( l_{n} \) with known
  clearance of its start state and therefore also lower bounded by
  Lemma~\ref{thm:end-state},
  \begin{align*}
    \int_{\patharg[n]}^{\arclength}
    \frac{1}{\clearance{\path{\patharg}}}\diff{\patharg}
      &\geq \ln\left( \frac{\dist[n] + \arclength_{n}}{\dist[n]} \right) \\
      &= \ln\left( \frac{\dist[n] + \arclength - \patharg[n]}{\dist[n]} \right).\stepcounter{equation}\tag{\theequation}\label{eq:last-segment}
  \end{align*}

  Each of the segments between \( \patharg[1] \) and \( \patharg[n] \) can be
  viewed as a path with known clearance at the end-states and is therefore
  lower bounded by the result of Theorem~\ref{thm:end-states}. Specifically,
  the segment from \( \patharg[i] \) to \( \patharg[i + 1] \) with
  \( i \in [1, n - 1] \) is lower bounded by
  \begin{align*}
    \int_{\patharg[i]}^{\patharg[i + 1]} \frac{1}{\clearance{\path{\patharg}}} \diff{\patharg}
      &\geq \ln\left( \frac{{\left( \dist[i] + \dist[i + 1] + \arclength[i] \right)}^{2}}{4\dist[i]\dist[i + 1]} \right)\\
      &= \ln\left( \frac{{\left( \dist[i] + \dist[i + 1] + \patharg[i + 1] - \patharg[i] \right)}^{2}}{4\dist[i]\dist[i + 1]} \right).\stepcounter{equation}\tag{\theequation}\label{eq:mid-segment}
  \end{align*}

  A lower bound on the path cost can be computed by adding the lower bounds~\eqref{eq:first-segment},~\eqref{eq:last-segment}
  and~\eqref{eq:mid-segment}
  \begin{align*}
    c(\path*{}) &= \int_{0}^{\arclength} \frac{1}{\clearance{\path{\patharg}}}\diff{\patharg} \\
                &= \int_{0}^{\patharg[1]} \frac{1}{\clearance{\path{\patharg}}}\diff{\patharg}
                  + \sum_{i = 1}^{n - 1} \int_{\patharg[1]}^{\patharg[i + 1]} \frac{1}{\clearance{\path{\patharg}}}\diff{\patharg}
                  + \int_{\patharg[n]}^{\arclength} \frac{1}{\clearance{\path{\patharg}}}\diff{\patharg} \\
                &\geq \ln\left( \frac{\dist[1] + \patharg[1]}{\dist[1]} \right)
                  + \sum_{i = 1}^{n - 1} \ln\left( \frac{{\left( \dist[i] + \dist[i + 1] + \patharg[i + 1] - \patharg[i] \right)}^{2}}{4\dist[i]\dist[i + 1]} \right)
                  + \ln\left( \frac{\dist[n] + \arclength - \patharg[n]}{\dist[n]} \right).
  \end{align*}
\end{proof}

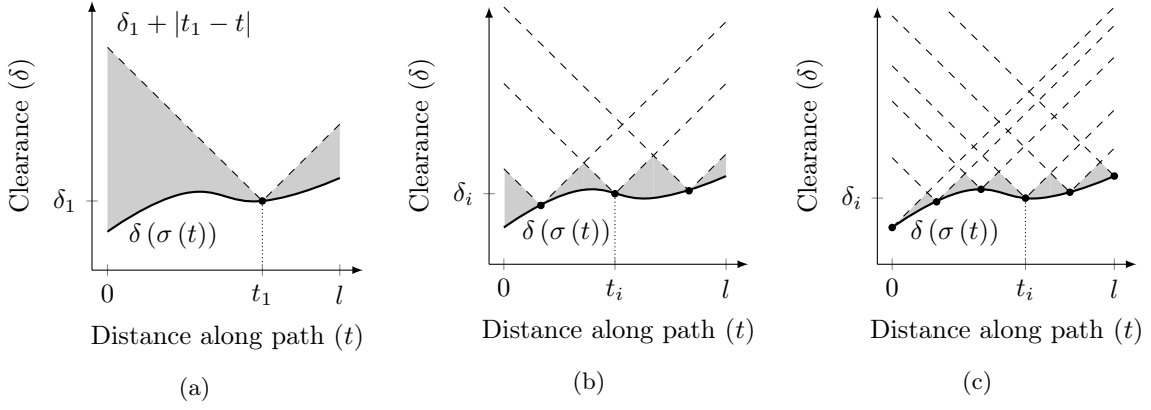
\begin{figure}[t]
  \centering
  \begin{subfigure}{0.33\textwidth}
    \begin{tikzpicture}
  \begin{axis} [
    width = \textwidth,
    height = \textwidth,
    ymin = 1.0,
    ymax = 4.5,
    xmin = -0.2,
    xmax = 3.3,
    xtick = {0, 2, 3},
    xticklabels = {\( 0 \), \( \patharg[1] \), \( l \)},
    ytick = {1.9},
    yticklabels = {\( \dist[1] \)},
    axis x line = bottom,
    axis y line = left,
    axis line style = {-latex},
    xlabel = {Distance along path (\( t \))},
    ylabel = {Clearance (\( \clearance*{} \))},
    axis on top
    ]
    % Draw the start and goal states.
    \node (state1) [state] at (2.0, 1.9) {};
    \draw [densely dotted] (state1) -- (2.0, 0.0);

    % Draw the clearance of the path.
    \addplot [
      smooth,
      tension = 0.8,
      thick,
      name path = clearance
    ] coordinates {
      (0.0, 1.5)
      (1.0, 2.0)
      (2.0, 1.9)
      (3.0, 2.2)
    } node [below = 5pt, pos = 0.3] {\( \clearance{\pathalias{\patharg}} \)};

    % Draw the upper bound on the clearance.
    \addplot [
      samples = 100,
      domain = 0:3,
      name path = bound1,
      bounds
    ] {
      abs(x - 2.0) + 1.9
    } node [pos = 0.0, fill = white, anchor = south west] {\( \dist[1] + \abs{\patharg[1] - \patharg} \)};

    % Fill the area between the clearance and the upper bound.
    \addplot [
      fill = espgray
    ] fill between [of = bound1 and clearance];
  \end{axis}
\end{tikzpicture}

%%% Local Variables:
%%% mode: latex
%%% TeX-master: "../main"
%%% End:
  \caption{}%
  \label{fig:single-state}
\end{subfigure}
\begin{subfigure}{0.32\textwidth}
  \begin{tikzpicture}
  \begin{axis} [
    width = \textwidth,
    height = \textwidth,
    ymin = 1.0,
    ymax = 4.5,
    xmin = -0.2,
    xmax = 3.3,
    xtick = {0, 1.5, 3},
    xticklabels = {\( 0 \), \( \patharg[i] \), \( l \)},
    ytick = {1.96},
    yticklabels = {\( \dist[i] \)},
    axis x line = bottom,
    axis y line = left,
    axis line style = {-latex},
    xlabel = {Distance along path (\( t \))},
    ylabel = {Clearance (\( \clearance*{} \))},
    axis on top
    ]
    % Draw the start and goal states.
    \node (state1) [state] at (0.5, 1.8) {};
    \node (state2) [state] at (1.5, 1.96) {};
    \node (state3) [state] at (2.5, 2.0) {};
    \draw [densely dotted] (state2) -- (1.5, 0.0);

    % Draw the clearance of the path.
    \addplot [
      smooth,
      tension = 0.8,
      thick,
      name path = clearance
    ] coordinates {
      (0.0, 1.5)
      (1.0, 2.0)
      (2.0, 1.9)
      (3.0, 2.2)
    } node [below = 5pt, pos = 0.3] {\( \clearance{\pathalias{\patharg}} \)};

    % Draw the upper bounds on the clearance.
    \addplot [
      samples = 100,
      domain = 0:3,
      name path = bound1,
      bounds
    ] {
      abs(x - 0.5) + 1.80
    };
    \addplot [
      samples = 100,
      domain = 0:3,
      name path = bound2,
      bounds
    ] {
      abs(x - 1.5) + 1.96
    };
    \addplot [
      samples = 100,
      domain = 0:3,
      name path = bound3,
      bounds
    ] {
      abs(x - 2.5) + 2.00
    };

    % Fill the area between the clearance and the upper bounds.
    \addplot [
      fill = espgray
    ] fill between [of = bound1 and clearance, soft clip = {domain = 0.01:1.08}];
    \addplot [
      fill = espgray
    ] fill between [of = bound2 and clearance, soft clip = {domain = 1.08:2.02}];
    \addplot [
      fill = espgray
    ] fill between [of = bound3 and clearance, soft clip = {domain = 2.02:3.00}];

  \end{axis}
\end{tikzpicture}

%%% Local Variables:
%%% mode: latex
%%% TeX-master: "../main"
%%% End:
  \caption{}%
  \label{fig:many-states}
\end{subfigure}
\begin{subfigure}{0.32\textwidth}
  \begin{tikzpicture}
  \begin{axis} [
    width = \textwidth,
    height = \textwidth,
    ymin = 1.0,
    ymax = 4.5,
    xmin = -0.2,
    xmax = 3.3,
    xtick = {0, 1.8, 3},
    xticklabels = {\( 0 \), \( \patharg[i] \), \( l \)},
    ytick = {1.9},
    yticklabels = {\( \dist[i] \)},
    axis x line = bottom,
    axis y line = left,
    axis line style = {-latex},
    xlabel = {Distance along path (\( t \))},
    ylabel = {Clearance (\( \clearance*{} \))},
    axis on top
    ]
    % Draw the start and goal states.
    \node (start)  [state] at (0.0, 1.50) {};
    \node (end)    [state] at (3.0, 2.20) {};
    \node (state1) [state] at (0.6, 1.85) {};
    \node (state2) [state] at (1.2, 2.02) {};
    \node (state3) [state] at (1.8, 1.90) {};
    \node (state4) [state] at (2.4, 1.98) {};
    \draw [densely dotted] (state3) -- (1.8, 0.0);

    % Draw the clearance of the path.
    \addplot [
      smooth,
      tension = 0.8,
      thick,
      name path = clearance
    ] coordinates {
      (0.0, 1.5)
      (1.0, 2.0)
      (2.0, 1.9)
      (3.0, 2.2)
    } node [below = 5pt, pos = 0.3] {\( \clearance{\pathalias{\patharg}} \)};

    % Draw the upper bounds on the clearance.
    \addplot [
      samples = 100,
      domain = 0:3,
      name path = bound1,
      bounds
    ] {
      x + 1.5
    };
    \addplot [
      samples = 100,
      domain = 0:3,
      name path = bound2,
      bounds
    ] {
      (3 - x) + 2.2
    };
    \addplot [
      samples = 100,
      domain = 0:3,
      name path = bound3,
      bounds
    ] {
      abs(x - 0.6) + 1.85
    };
    \addplot [
      samples = 100,
      domain = 0:3,
      name path = bound4,
      bounds
    ] {
      abs(x - 1.2) + 2.02
    };
    \addplot [
      samples = 100,
      domain = 0:3,
      name path = bound5,
      bounds
    ] {
      abs(x - 1.8) + 1.90
    };
    \addplot [
      samples = 100,
      domain = 0:3,
      name path = bound6,
      bounds
    ] {
      abs(x - 2.4) + 1.98
    };

    % Fill the area between the clearance and the upper bounds.
    \addplot [
      fill = espgray
    ] fill between [of = bound1 and clearance, soft clip = {domain = 0.01:0.475}];
    \addplot [
      fill = espgray
    ] fill between [of = bound3 and clearance, soft clip = {domain = 0.475:0.60}];
    \addplot [
      fill = espgray
    ] fill between [of = bound3 and clearance, soft clip = {domain = 0.60:0.985}];
    \addplot [
      fill = espgray
    ] fill between [of = bound4 and clearance, soft clip = {domain = 0.985:1.2}];
    \addplot [
      fill = espgray
    ] fill between [of = bound4 and clearance, soft clip = {domain = 1.2:1.44}];
    \addplot [
      fill = espgray
    ] fill between [of = bound5 and clearance, soft clip = {domain = 1.44:1.8}];
    \addplot [
      fill = espgray
    ] fill between [of = bound5 and clearance, soft clip = {domain = 1.8:2.14}];
    \addplot [
      fill = espgray
    ] fill between [of = bound6 and clearance, soft clip = {domain = 2.14:2.4}];
    \addplot [
      fill = espgray
    ] fill between [of = bound6 and clearance, soft clip = {domain = 2.4:2.81}];
    \addplot [
      fill = espgray
    ] fill between [of = bound2 and clearance, soft clip = {domain = 2.81:3.0}];

  \end{axis}
\end{tikzpicture}

%%% Local Variables:
%%% mode: latex
%%% TeX-master: "../main"
%%% End:
    \caption{}%
  \label{fig:arbitrary-states}
\end{subfigure}
\caption{Illustrations of path-cost heuristics. The gray area indicates the
  error of the heuristic, which decreases as more states with known clearance
  are added.}
\end{figure}

%%% Local Variables:
%%% mode: latex
%%% TeX-master: "../main"
%%% End:

The accuracy of this heuristic improves as the number of states of known
clearance increases (Figures~\ref{fig:many-states}
and~\subref{fig:arbitrary-states}). If the start and end states are among the
states of known clearance, then the path is a chain of paths whose end states
have known clearance and Theorem~\ref{thm:arbitrary-num-states} simplifies to a
sum of Theorem~\ref{thm:end-states} over all segments
(Corollary~\ref{thm:path-cost-end-states}).

\begin{corollary}[An admissible path-cost heuristic when the clearance of the
  end states and other states of the path is known]\label{thm:path-cost-end-states}
  Let \( \path*{} \in \paths \) be a path with arc length \( \arclength \). Let
  \( 0 = \patharg[1] < \patharg[2] < \cdots < \patharg[n] = \arclength \) be a
  sequence of \( n \) numbers between \( 0 \) and \( \arclength \) whose
  associated states on the path have known clearance,
  \( \dist[i] \coloneqq \clearance{\path{\patharg[i]}} \) for
  \( i = 1, 2, 3, \hdots, n \). The reciprocal clearance cost
  \( \cost{\path*{}} \) of the path \( \path*{} \) is then lower bounded by
  \begin{align*}
     \cost{\path*{}} \geq \sum_{i = 1}^{n - 1} \ln\left( \frac{{\left( \dist[i] + \dist[i + 1] + \patharg[i + 1] - \patharg[i] \right)}^{2}}{4\dist[i]\dist[i + 1]} \right).
  \end{align*}
\end{corollary}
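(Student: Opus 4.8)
The plan is to read off the corollary as an immediate specialization of Theorem~\ref{thm:arbitrary-num-states}, using only the fact that the endpoints of the path are now themselves among the states of known clearance, i.e., $\patharg[1] = 0$ and $\patharg[n] = \arclength$. First I would substitute these two endpoint values into the general bound. The leading term becomes $\ln\!\left( \frac{\dist[1] + \patharg[1]}{\dist[1]} \right) = \ln\!\left( \frac{\dist[1]}{\dist[1]} \right) = 0$, and the trailing term becomes $\ln\!\left( \frac{\dist[n] + \arclength - \patharg[n]}{\dist[n]} \right) = \ln\!\left( \frac{\dist[n]}{\dist[n]} \right) = 0$. Only the interior sum survives, which is exactly the claimed lower bound.

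Equivalently, and in the spirit of the remark preceding the statement, I would argue directly by splitting the reciprocal clearance cost at the known-clearance states. Since $\patharg[1] = 0$ and $\patharg[n] = \arclength$ coincide with the start and goal, there is no leftover leading or trailing piece, and the path decomposes into exactly $n - 1$ consecutive segments $[\patharg[i], \patharg[i + 1]]$. Each such segment is a path whose two end states have known clearance and whose arc length is exactly $\patharg[i + 1] - \patharg[i]$, so Theorem~\ref{thm:end-states} (applied with the segment's exact arc length in place of $\lbarclength$) bounds its contribution below by $\ln\!\left( \frac{{\left( \dist[i] + \dist[i + 1] + \patharg[i + 1] - \patharg[i] \right)}^{2}}{4\dist[i]\dist[i + 1]} \right)$. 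Summing over $i = 1, \dots, n - 1$ then yields the inequality.

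I expect no genuine obstacle here: the result is a corollary in the strict sense. The only point worth verifying is that the endpoint assumptions $\patharg[1] = 0$ and $\patharg[n] = \arclength$ are precisely what force the two logarithmic correction terms of Theorem~\ref{thm:arbitrary-num-states} to collapse to $\ln(1) = 0$; once that is noted, the bound follows without further computation.
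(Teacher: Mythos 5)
Your proof is correct and takes essentially the same route as the paper: the paper's own proof is exactly your first argument, specializing Theorem~\ref{thm:arbitrary-num-states} with \( \patharg[1] = 0 \) and \( \patharg[n] = \arclength \) so that the leading and trailing terms collapse to \( \ln(1) = 0 \). Your alternative segment-by-segment decomposition via Theorem~\ref{thm:end-states} is just the same argument unfolded, consistent with the remark preceding the corollary.
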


\begin{proof}
  (Figure~\ref{fig:arbitrary-states}) The proof follows from Theorem~\ref{thm:arbitrary-num-states} by setting
  \( \patharg[1] = 0 \) and \( \patharg[n] = \arclength \).
\end{proof}

\bibliographystyle{plainnat}
\bibliography{\string~/Dropbox/bibliography/phd}

\end{document}